\title{Joint Nonparametric Precision Matrix Estimation with Confounding}
\author{ {\bf Sinong Geng} \\
Department of Computer Science \\
Princeton University\\
Princeton, NJ 08544 \\
\And
{\bf Mladen Kolar}  \\
Booth School of Busines\\
The University of Chicago\\
Chicago, IL 60637 \\
\And
{\bf Oluwasanmi Koyejo}   \\
Department of Computer Science \\
University of Illinois Urbana-Champaign \\
Urbana, IL 61801\\
}
\newcommand{\norm}[1]{\lVert #1 \rVert}
\DeclarePairedDelimiter\abs{\lvert}{\rvert}%
\let\oldabs\abs
\def\abs{\@ifstar{\oldabs}{\oldabs*}}
\newcommand{\be}{\mathbf{e}}
\newcommand{\bz}{\mathbf{z}}
\newcommand{\ba}{\mathbf{a}}
\newcommand{\bOmega}{\mathbf{\Omega}}
\newcommand{\bZ}{\mathbf{Z}}
\newcommand{\bM}{\mathbf{M}}
\newcommand{\bR}{\mathbf{R}}
\newcommand{\br}{\mathbf{r}}
\newcommand{\bS}{\mathbf{S}}
\newcommand{\bI}{\mathbf{I}}
\newcommand{\bbeta}{\bm{\beta}}
\newcommand{\bSigma}{\mathbf{\Sigma}}
\newcommand{\RR}{\mathbb{R}} 
\newcommand{\EE}{\mathbb{E}} 
\DeclareMathOperator*{\Var}{{\rm Var}}
\newcommand{\bmu}{\mathbf{\mu}}
\newcommand\given[1][]{\:#1\vert\:}
\DeclareMathOperator*{\argmin}{arg\,min}
\newtheorem{lemma}{Lemma}
\newtheorem{theorem}{Theorem}
\newtheorem{assumption}{Assumption}
\algrenewcommand\algorithmicindent{0.8em}
\begin{document}

\maketitle

\begin{abstract}
	We consider the problem of precision matrix estimation where, 
	due to extraneous confounding of the underlying precision matrix, 
	the data are independent but not identically distributed. While such confounding occurs in many scientific problems, 
	our approach is inspired by recent neuroscientific research suggesting that brain function, as measured using functional magnetic resonance imaging (fMRI), 
	is susceptible to confounding by physiological noise, such as breathing and subject motion. 
	Following the scientific motivation, we propose a graphical model, which in turn motivates 
	a joint nonparametric estimator.  We provide theoretical guarantees for the consistency 
	and the convergence rate of the proposed estimator. In addition, we demonstrate that the
	optimization of the proposed estimator can be transformed into a series of 
	linear programming problems and, thus, can be efficiently solved in parallel.
	Empirical results are presented using simulated and real brain imaging data and
	suggest that our approach improves precision matrix estimation as compared to baselines 
	when confounding is present.
\end{abstract}

\section{INTRODUCTION}
We consider the problem of precision matrix estimation where, 
due to extraneous confounding of the underlying precision matrix, 
the data are independent but not identically distributed.
While such confounding occurs in many scientific problems, 
our approach is inspired by applications to brain connectivity 
estimation from functional brain imaging. Functional brain connectivity 
has emerged as one of the most promising tools in the neuroscience 
toolbox for elucidating brain organization~\citep{biswal1995functional, fox2007spontaneous} 
and its relationship to behavior~\citep{sadaghiani2013functional,fornito2013,zalesky2012use,barch2013function}. 
Multiple studies suggest that functional brain connectivity may provide an accurate biomarker for cognitive 
disorders -- from Alzheimer's and schizophrenia, 
to autism and depression~\citep{price2014multiple,arbabshirani2011functional}, 
and may be the key to better understanding of these cognitive diseases. 

However, there is growing evidence that brain function as measured by
functional magnetic resonance imaging (fMRI), 
is susceptible to confounding by physiological noise, 
such as breathing and subject motion \citep{laumann2016stability, goto2016head}. 
In particular, these physiological signals cause complicated effects (usually non-linear),
and induce incorrect strong connectivity between brain areas. 
Perhaps most strikingly, some authors have suggested that much of what we now 
think of as functional connectivity might simply reflect these physiological
confounders.  A variety of techniques have been proposed in the neuroimaging literature for addressing 
the effects of physiological confounders~\citep{van2012influence, power2014methods,caballero2017methods}, 
most commonly by attempting to regress out their effects from the time series, or
using matrix factorization methods such as independent components analysis. 
While these methods may be effective for removing \textbf{\emph{linear}} confounding 
in the observed time series (or in the covariance), none of these address
our core concern of removing the effects of physiological confounding 
in the precision matrix, whose structure, in most of the cases,
directly corresponds to the connectivity of brain areas. 

In contrast to prior work, our manuscript addresses the physiological confounding
via a varying statistical graphical model. Specifically, by allowing underlying 
models to change over each observation, the proposed method directly addresses
the confounding of the precision matrices. We consider a novel method for precision 
matrix estimation from non-identically distributed data, where the extraneous factors
may \textbf{\emph{nonlinearly}} induce additive noise in the precision matrix. 
We propose a joint nonparametric estimator (JNE) that estimates the objective 
precision matrix using independent, but non-identically distributed (i.n.d.)~data. 
Surprisingly, the provided theoretical guarantees not only indicate the consistency of JNE,
but also point out a convergence rate comparable to that of 
state-of-the-art methods \textbf{\emph{without any}} confounding. 
An efficient optimization procedure based on linear programming that can easily 
be parallelized is applied to the parameter estimation. 
While the model is motivated and applied to neuroimaging, our results 
may be of more general interest to other applications where non-i.i.d.~signals 
induced by confounders are prevalent, such as financial and social 
network applications \citep{hong2012discovering,lee2015learning,geng2017efficient,geng2018stochastic,Geng2019Partially,Suggala2017Expxorcist}, 
and drug-condition interaction analysis~\citep{kuang2016baseline, kuang2016computational,kuang2017pharmacovigilance,Sun2015Learning}. 

We summarize the main contributions as follows:
\begin{itemize}
	\item We propose a graphical model for precision matrix estimation where the data are 
	independent, but not identically distributed, 
	due to systematic effects of confounders on the underlying precision matrix.
	\item We propose a joint nonparametric estimator and rigorously prove its 
	consistency and rate of convergence.
	\item We evaluate the resulting estimator using simulated and 
	real brain imaging data showing improved performance when confounding is present.
\end{itemize}

The paper is organized as follows. The overall approach for graphical
modeling with physiological confounders is outlined in Section~\ref{sec:model}.
Our proposed joint nonparametric estimator is outlined in Section~\ref{sec:estimation}. 
Three types of models related to the proposed one are discussed in Section~\ref{sec:related-models}. 
We investigate the consistency and convergence rate of the estimator
in Section~\ref{sec:consistency}. Experimental results on simulated 
and real brain imaging data are provided in Section~\ref{sec:experiments}.
We conclude in Section~\ref{sec:conclusion}.

\section{GRAPHICAL MODELING OF PHYSIOLOGICAL CONFOUNDERS}
\label{sec:model}

In this section, we introduce our model for brain connectivity analysis
with physiological confounders based on the 
framework of probabilistic graphical models. 
Undirected probabilistic graphical models are widely used to explore
and represent dependencies among random variables
\citep{Lauritzen1996Graphical}, in areas ranging
from image processing \citep{mignotte2000sonar} to 
multiple testing \citep{liu2014multiple,liu2016multiple} and 
computational biology \citep{friedman2004inferring, kuang2017screening}. 

An undirected
probabilistic graphical model consists of an undirected graph
${\cal G} = (V,E)$, where $V = \{1, \ldots, p\}$ is the vertex set and
$E \subset V \times V$ is the edge set, and a random vector
$\bZ = (Z_1, \ldots, Z_p) \in {\cal Z}^p \subseteq \RR^P$. Each coordinate
of the random vector $Z$ is associated with a vertex in $V$, and the
graph structure encodes the conditional independence assumptions
underlying the distribution of $Z$. In particular, 
$Z_j$ and $Z_{j'}$, with $j$, $j'$ $\in V$, are
conditionally independent given all the other variables if and only if
$(j,j') \not\in E$, that is, the nodes $j$ and $j'$ are not adjacent in
${\cal G}$. One of the fundamental problems in statistics is that of
learning the structure of ${\cal G}$ from i.i.d.~samples from $Z$ and
quantifying uncertainty of the estimated structure.
A recent review of algorithms for
learning the structure of graphical models is provided by~\cite{Drton2016Structure}.

Gaussian graphical models are commonly used for modeling continuous $\bZ$. In
this case, the edge set $E$ can be recovered by estimating the inverse
covariance matrix $\bOmega = \bSigma^{-1}$, known as the precision matrix. 
The sparsity pattern of the precision matrix encodes the edge set $E$, 
that is, $(j, j') \in E$ if and only if
$\bOmega_{jj'} \neq 0$. Therefore, sparse estimators of precision matrices,
like graphical Lasso \citep{friedman2008sparse} and CLIME \citep{Cai2011Constrained},
are commonly used for learning the structure of Gaussian graphical models.    

We assume that we are given $n$ independent observations $\left\{ \bz^i, g^i \right\}_{i \in [n]}$ from
the joint distribution of $(\bZ, G)$, where $\bZ \in \RR^p$ is a random
vector representing brain measurements and $G$ is a random variable representing confounders such as micro-motion. Rather than assuming that 
the confounders only linearly affect the mean of $\bZ$, 
as is commonly assumed in the literature~\citep{van2012influence, power2014methods,caballero2017methods}, 
we assume that it affects both the mean and 
the variance of $\bZ$. In particular, we assume that the conditional mean 
$\mu(g) = \EE(\bZ \mid G = g) \in \RR^p$  is a smooth function of the motion variable, and 
that the conditional covariance matrix 
\begin{equation*}
\bSigma(g) = \Var(\bZ \mid G=g)
\end{equation*}
has an inverse which
takes the form
\begin{equation}
\label{eq:model:precision}
\bOmega(g) = \bSigma^{-1}(g) = \bOmega^0 + \bR(g). 
\end{equation}
In the above model, $\bOmega^0$ is the target precision matrix while the term $\bR(g)$ is a nuisance component that arises due to physiological confounders. For the neuroscience application the sparsity pattern $\bOmega^0$ encodes the brain connectivity.

 Such an additive form for the precision matrix
 significantly generalizes existing models in the literature 
 \cite{van2012influence, power2014methods,caballero2017methods}, 
 where the following model is assumed:
\begin{equation}
\label{eq:lr-ggm}
\bZ = \bbeta^{\top}G + \bZ',
\end{equation} 
where $\bZ'$ follows a Gaussian graphical model with parameter $\bOmega$, 
and $\bbeta^{\top}G$ corresponds to linear confounding. 
Under the model in \eqref{eq:lr-ggm}, conditionally on $G=0$, 
$\bZ$ is equivalent to $\bZ'$ and
the target parameter for the non-confounded structure is $\bOmega$. 
However, for any $g$, the conditional distribution of $\bZ \given G=g$ is always 
a Gaussian graphical model with the precision matrix $\bOmega$.
In other words, such models indeed assume that the underlying precision matrices are \emph{not} affected by the confounders. 

It should be noted that recovering $\bOmega^0$ is impossible 
without any constrains on $\bR(\cdot)$.
Our identifiability condition for $\bOmega^0$ assumes that $\EE(\bR(G)) = \bm{0}$.
We justify this assumption from two perspectives. First, it is common in 
the nonparametric estimation literature to assume that the unknown curve has mean zero. 
Without this assumption, the constant term could be absorbed in the nonparametric component. 
Second, as we mentioned before, most of the widely used existing models
assume that the confounder does not affect the precision matrices of the 
observations, which is equivalent to assuming
\begin{equation}
\label{eq:zero-mean}
\bR(g) = 0,
\end{equation}
for any $g$, in \eqref{eq:model:precision}. 
As a result, the provided zero-expectation assumption 
relaxes \eqref{eq:zero-mean} asymptotically.  

Furthermore, we assume that the elements of $\bSigma(g)$ are smooth functions of $g$, 
which will facilitate our nonparametric estimation procedure. Practically, for the 
motivating micro-movement problem, the confounding is often assumed to be both 
linear and smooth \citep{van2012influence, power2014methods,caballero2017methods},
as in \eqref{eq:lr-ggm}. Therefore, we extend the linear assumption to a smooth, 
but nonparametric assumption. We formulate these assumption rigorously in Section~\ref{sec:assumptions}.

\section{RELATED MODELS}
\label{sec:related-models}

The model in \eqref{eq:model:precision} is motivated from the perspective of multi-task learning, where
for each task one has a parameter vector that can be decomposed into a common component, corresponding 
to $\bOmega^0$ in our setting, and a task specific component, corresponding to $\bR(g)$ in our setting 
\citep{Evgeniou2004Regularized}. The goal in multi-task learning is to improve prediction performance in 
supervised learning, while our goal is on identifying the common brain connectivity by removing the contamination
effects of motion. A big difference compared to the literature on multi-task learning is that here we have 
infinitely many tasks if $G$ has a density. While in multi-task learning one may not impose additional structure
on $\bR(g)$, here we assume smoothness over the motion variable $g$. The effect motion can be seen through $\bR(g)$, which
modulates the strength of edges in the true structure or adds spurious edges.

Our model \eqref{eq:model:precision} is closely related to the literature on time-varying undirected graphical models
\citep{Zhou08time, kolar2009sparsistent, kolar2011time, yin10nonparametric, kolar10nonparametric, 
Kolar2010Estimating,
kolar10estimating,
geng2018temporal,
kolar09nips_tv_paper,
song09time,
le09keller}. However, in this literature 
one is interested in estimating $\bOmega(g)$ as a function of time, 
without assuming existence of confounding effects. Simply averaging estimated $\bOmega(g)$ over $g$ 
will lead to inefficient estimators of $\bOmega^0$, as suggested in experiments in Section~\ref{sec:syn-data}. 

Another strand of the related literature focuses on estimation of multiple graphical 
models under the assumption that they
are  structurally similar \citep{Chiquet2011Inferring,Guo2011Joint,Danaher2011Joint,kolar13multiatticml,
Kolar2014Graph,Mohan2014Node,lee2015joint}.
This literature is similar to multi-task learning in that the goal is to leverage similarity between multiple 
related graphical models, with the focus on a finite, and usually very small, number of different graphs. 
This class of models turn out to be not applicable to our problem as outlined in Section~\ref{sec:syn-data}. 

\section{JOINT NONPARAMETRIC ESTIMATION TO THE PRECISION MATRIX}
\label{sec:estimation}

In this section, we propose an estimator for $\bOmega^0$ under the setting described in 
Section~\ref{sec:model}.
Since we only have one observation for any $G = g$ to estimate $\bOmega(g)$, we are going to pull 
the information from nearby observations. Specifically, we define a nonparametric estimator for 
the covariance matrix at $G = g$ as:
\begin{align}
	\label{eq:cov_estim}
	\begin{split}
		\bS(g) := \frac{\sum_{i=1}^n w_{i}(g)\bz^{i} \left(\bz^{i} \right)^\top}{ \sum_{i=1}^nw_{i}(g)} := \sum_{i=1}^n W_{i}(g)\bz^{i} \left(\bz^{i} \right)^\top,
	\end{split}
\end{align}
where 
\begin{equation*}
w_{i}(g) = \psi\left(\abs{g^i-g}/h\right)
\end{equation*}
with a symmetric density function $\psi(\cdot)$ and 
a user specified bandwidth $h > 0$. 
In practice, we select the bandwidth following the procedure in \cite{li2013optimal}. 
For convenience, we define $\bS^i := \bS(g^i)$, $W_{i,i'} = W_i(g^{i'})$, and 
$\bR^i = \bR(g^i)$.

With the covariance matrix estimator in \eqref{eq:cov_estim}, we define the proposed JNE as
\begin{align}
\label{eq:estimation}
\begin{split}
&\hat{\bOmega}^{0}, \left\{ \hat{\bR}^i \right\}_{i=1}^n   
\\=& \argmin_{\bM,\ \left \{\bR^{i} \right \}_{i=1}^n }\quad \left\{ \norm{\bM}_{L_1} +\frac{1}{n}\sum_{i=1}^n \norm{\bR^{i}}_{L_1} \right\}, 
\end{split}
\end{align}
with the following constraints:
\begin{align}
\label{eq:constraint}
    \begin{split}
&\quad \abs{ \bS^{i} \left(\bM+\bR^{i}\right)-\bI }_{\infty} \le \lambda, \quad  i=1,\ldots,n, \\
&\quad  \sum_{i=1}^n \bR^{i} = \bm{0},
\end{split}
\end{align}
where $\abs{\cdot}_\infty$ denotes the elementwise $L_\infty$ norm, 
$\abs{\cdot}_1$ the $L_1$ vector norm, 
and $\norm{\cdot}_{L_1}$ the $L_1$ matrix norm. 
The tuning parameter $\lambda$ is user-specified and controls how
close the estimated precision matrix is to the inverse of the kernel-estimated covariance matrix 
for each  $g \in \left\{g^i\right\}$. For ease of presentation, 
we use $\bM$ to denote the common part of precision matrices in 
the optimization program.

Although JNE has a similar form as CLIME~\citep{Cai2011Constrained}, it differs in two aspects: 
the sample covariance in CLIME is replaced by local kernel estimates; 
the constraint in \eqref{eq:constraint} incorporates all the local estimates. 
These modifications allow for pooling of the information from all samples, 
and as a result, JNE achieves a similar non-asymptotic sample complexity as CLIME 
for i.i.d. samples (see Section~\ref{sec:consistency-M}).

As in CLIME \citep{Cai2011Constrained}, we encourage the precision
matrix to be sparse, which in our case is equivalent to a sparse $\bM$ and sparse nuisance matrices $\bR^i$'s. 
We note that recovery of $\bOmega^0$ becomes increasingly more challenging as
nuisance matrices become more dense.
While the proposed estimator encourages sparsity, it is \emph{not} a necessary condition. In other words, 
the method is still applicable if the underlying $\bOmega^0$ is less sparse,  in which case the $\lambda$ that facilitates a consistent estimator should be chosen 
appropriately. Furthermore, 
according to Theorem~\ref{thm:consistency-M}, more data will be required for a 
less sparse $\bOmega^0$.

The objective function of JNE \eqref{eq:estimation} 
can be decomposed with respect to the columns of $\bM$, similar to the decomposition used in \cite{Cai2011Constrained}. 
In particular, for a matrix $\bM$, let $\bM_{*j}$ denote the $j$-th column vector. 
Then, for each $j=1,\ldots,p$, we consider the following $p$ minimization problems separately:
\begin{align}
\begin{split}
\label{eq:estimation-unconstrained-decomp}
& \argmin_{\bM_{*j},\ \left \{\bR^{i}_{*j} \right \}_{i=1}^n }\quad\left\{ \abs{\bM_{*j}}_{1} +\frac{1}{n}\sum_{i=1}^n \abs{\bR^{i}_{*j}}_{1} \right\} 
\end{split}
\end{align}
with the following constraints:
\begin{align*}
\begin{split}
&\quad \abs{ \bS^{i} \left(\bM_{*j}+\bR_{*j}^{i}\right)-\bI_{*j} }_{\infty} \le \lambda, \quad i=1,\ldots,n, 
\\
&\quad \sum_{i=1}^n \bR_{*j}^{i} = \bm{0}.
\end{split}
\end{align*}

The decomposed optimization tasks are instances of linear programs to which we apply 
the concurrent simplex method implemented 
in Gurobi \citep{gurobi}, which solves the problems on multiple threads simultaneously. 
Since the solution obtained by \eqref{eq:estimation-unconstrained-decomp} 
is not symmetric or positive definite in general, the final 
estimator is obtained after a symmetrization step as proposed by \cite{Cai2011Constrained}. 

\section{CONSISTENT ESTIMATION}
\label{sec:consistency}

In this section, we establish the consistency and the non-asymptotic sample complexity of JNE under mild assumptions. 

\subsection{ASSUMPTIONS}
\label{sec:assumptions}

We start off by listing assumptions, most of which are standard in kernel-based and 
CLIME-based methods. Assumption~\ref{ass:1} and~\ref{ass:2} specify the model and 
the structured effect of confounders on the covariance of $\bZ$ as described 
in Section~\ref{sec:model}. The assumptions enable identifiability of the underlying target.

\begin{assumption}\normalfont
	\label{ass:1}
	We assume that random variable $G$ has a density $f(g)$ with a compact support satisfying 
	\begin{align*}
		\inf f(g) \ge C_f > 0 \  \text{ and }\ \abs{f'_k(g) - f'_k(g')} \le C_d \abs{g - g'},
	\end{align*}
	for some constants $C_f$ and $C_d$. The conditional distribution of $\bZ$ given $G = g$
	is a Gaussian with the variance proxy $v$. That is,
	\begin{equation*}
		\Pr(|\ba^\top\bZ| > t) \leq c_1\exp{-c_2\cdot v\cdot t^2},
	\end{equation*}
	for any $\|\ba\|_2 = 1$ and some constants $c_1,c_2$. 
\end{assumption}

\begin{assumption}\normalfont
	\label{ass:2}
	We assume that 
	\begin{equation*}\Var(\bZ \mid G=g) = \bSigma(g)
	\end{equation*}
	with 
	\begin{equation*}
	    \bOmega(g) = \bSigma^{-1}(g) = \bOmega^0 + \bR(g)
	\end{equation*}
	and $\EE(\bOmega(G)) = \bOmega^0$. 
\end{assumption}

The following assumption ensures that the local covariance matrices are well behaved. A related assumption
was used in the analysis of CLIME \citep{Cai2011Constrained}.
\begin{assumption}\normalfont
	\label{ass:bound-mr}
	There exist $\Lambda_\infty, C_\infty \leq \infty$ such that 
	\begin{align*}
		\begin{split}
			\Lambda_\infty^{-1} \leq \inf_g \Lambda_{\min} (\bSigma{(g)})\leq \sup_g \Lambda_{\max} (\bSigma{(g)}) \leq \Lambda_\infty, 
		\end{split}
	\end{align*}
	and
	\begin{align*}
		\begin{split}
			\sup_g \abs{\bSigma{(g)}}_\infty \leq C_\infty,
		\end{split}
	\end{align*}
	where $\Lambda_{\min} (\cdot)$ and $\Lambda_{\max} (\cdot)$ denote the smallest and largest eigenvalues respectively. 
	Furthermore, there exist $C_{\bM}, C_{\bR} \leq \infty$ such that
	\[
	\sup_g \norm{\bR(g)}_{L_1} \le C_{\bR} \quad \text{ and } \quad \norm{\bOmega}_{L_1} \le C_{\bM}.
	\]
\end{assumption}
It should be noted that Assumption~\ref{ass:bound-mr} indeed indicates that 
$\bR(g)$'s and $\bOmega$ are sparse in terms of the $L_1$ norm. 
Also, $C_{\bR}$ and $C_{\bM}$ quantify the sparsity of the underlying precision matrices, 
and affect the convergence results in Theorem~\ref{thm:consistency-M}.
Back to the brain connectivity analysis, the sparsity assumption is reasonable,
as research suggests that the correlation among brain regions is a result of hub effects, 
and the conditional independence structure is likely to be sparse (or close to sparse)~\citep{hsieh2013big}.

Since we are using a local kernel estimator, 
we need the following three assumptions, which give regularity 
conditions that allow us to estimate $\bOmega^0$. 
Assumption~\ref{ass:kernel} imposes assumptions on the kernel function $\psi(\cdot)$ 
that are satisfied for a number of commonly used kernels.
\begin{assumption} \normalfont
	\label{ass:kernel}
	The kernel function $\psi(\cdot)$: $\mathbb{R} \rightarrow \mathbb{R}$ is a symmetric probability density function supported on $\left[ -1, 1\right]$. $h = O(\frac{1}{n^5})$. There exists a constant $C_\psi < \infty$, such that
	\begin{align*}
		\begin{split}
			\sup_x\abs{\psi(x)} \leq C_\psi\ \text{ and }\ \sup_x\psi(x)^2 \leq C_\psi .
		\end{split}
	\end{align*}
	Furthermore, $\psi(\cdot)$ is $C_L$-Lipschitz on  $\left[ -1, 1\right]$. That is 
	\begin{align*}
		\begin{split}
			\abs{\psi(x) - \psi(x')} \leq C_L \abs{x -x'}, \qquad x,x'\in \left[ -1, 1\right].
		\end{split}
	\end{align*}
\end{assumption}

The above assumption could be relaxed at the expense of more complicated proofs.

The next condition assumes smoothness of the conditional mean and variance of $\bZ$. 
\begin{assumption} \normalfont
	\label{ass:der}
	There exists a constant $C_{\bmu} < \infty$ such that
	\begin{align*}
		\begin{split}
			\sup_g
			\abs{\frac{d}{d g} \bmu{(g)}}_\infty \leq C_{\bmu}\  \text{ and }\  \sup_g\abs{\frac{d^2}{d g^2} \bmu{(g)}}_\infty \leq C_{\bmu}.
		\end{split}
	\end{align*}
	Furthermore, there exists a constant $C_{\bSigma} < \infty$ such that 
	\begin{align*}
		\begin{split}
			\sup_g
			\abs{\frac{d}{d g} \bSigma{(g)}}_\infty \leq C_{\bSigma}\  \text{ and }\  \sup_g\abs{\frac{d^2}{d g^2} \bSigma{(g)}}_\infty \leq C_{\bSigma}.
		\end{split}
	\end{align*}
\end{assumption}
Although complicated, the proposed assumptions are designed to relax the existing linear assumption \eqref{eq:lr-ggm} widely used in the neuroimaging literature~\citep{van2012influence, power2014methods,caballero2017methods}. Also, such assumptions are commonly used in nonparametric works to relax parametric assumptions. 

With the assumptions, we are ready to present our main result in the next section.

\subsection{CONVERGENCE RATE OF $\hat{\bOmega}^0$}\
\label{sec:consistency-M}

\begin{figure*}[t]
	\begin{minipage}[t]{0.45\linewidth}
		\centering
		\vspace{-50mm}
		\begin{tabular}{cccc}
			\hline \small
			\texttt{\# Nodes} & \texttt{Mm-CLIME} & \texttt{Ke-CLIME} & \texttt{Re-CLIME} \\
			\hline \normalsize
			10 & 0.078 & 0.078 & 0.039\\
			15 & 0.117 & 0.078 & 0.117\\
			20 & 0.117 & 0.117 & 0.117\\
			25 & 0.117 & 0.156 & 0.156\\
			\hline
		\end{tabular}
		\captionof{table}{Summary of the regularization parameters used for the considered methods using the synthetic datasets.}
		\label{tab:parameters}
	\end{minipage}
	\centering  
	\hspace{4mm}
	\begin{minipage}[b]{0.45\linewidth}
		\centering
		\includegraphics[scale=0.29]{./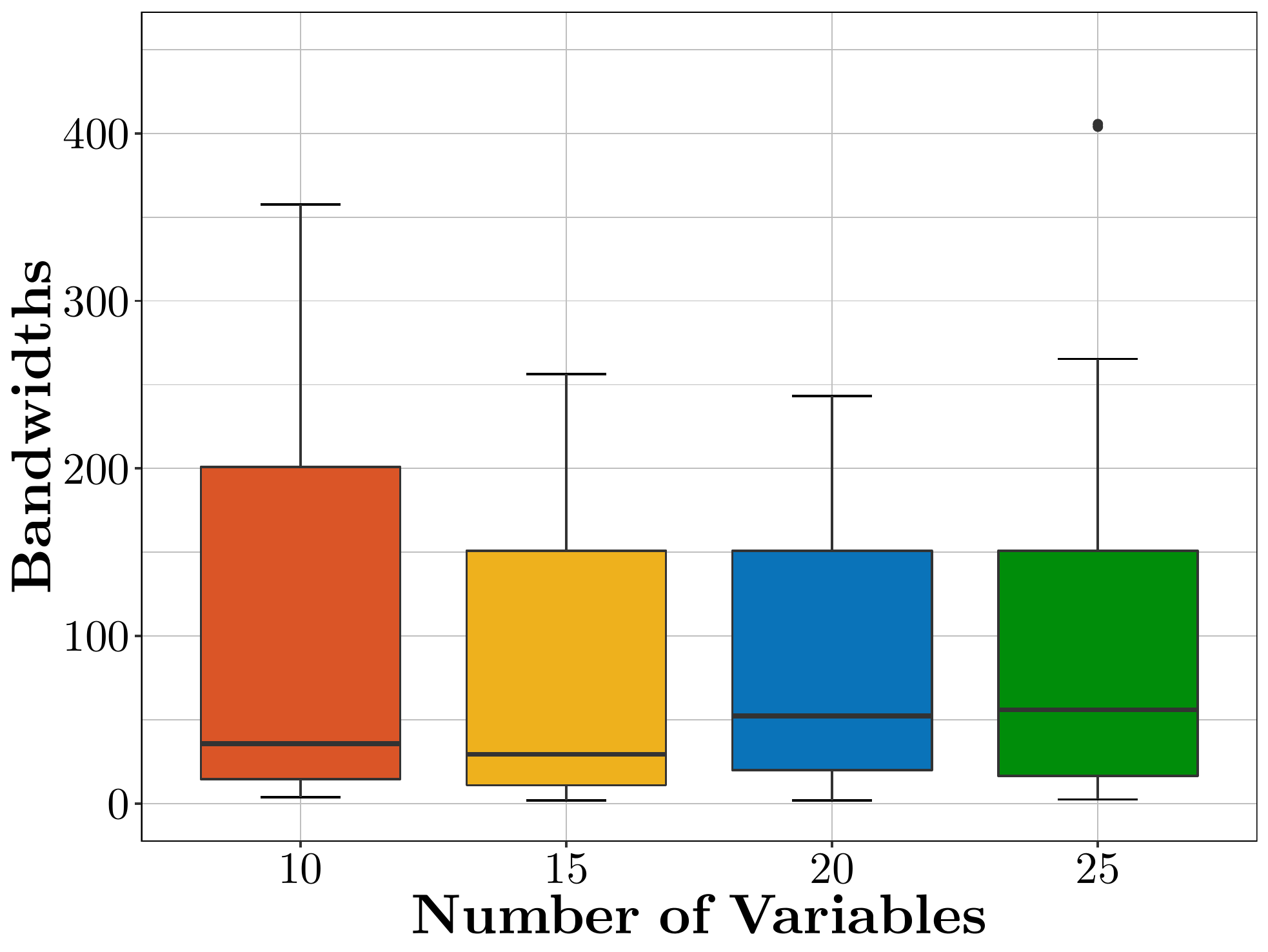}
		\caption{ Bandwidths for models with different numbers of variables for the synthetic datasets.}
		\label{fig:bw}
	\end{minipage}
	\vspace{2mm}
\end{figure*}

We provide the consistency and the non-asymptotic sample complexity of $\hat{\bOmega}^0$ in Theorem~\ref{thm:consistency-M}.

\begin{theorem}
	\label{thm:consistency-M}\normalfont
	Suppose that Assumptions~\ref{ass:1}--\ref{ass:der}
	are satisfied. 
	Given $n$ independent observations $\left\{ \bz^i, g^i \right\}_{i \in [n]}$, 
	we assume that there are $C_1>0$ and $C_2>0$, satisfying
	\begin{align*}
		\begin{split}
			\min_{\left\{ j,j'\given \exists i\in[n], \Omega_{jj'}(g^i) \neq 0 \right\}} & \sqrt{n^{-1} \sum_{i \in [n]} \left( \bOmega_{jj'}(g^i) \right)^2  } 
			\\&\qquad \qquad\quad\leq C_1 \sqrt{\log p} n^{-2/5},
		\end{split}
	\end{align*}
	where $n \geq C_2 d^{5/2}(\log p )^{5/4}$, 
	and 
	\begin{equation*}
		\lambda = \frac{\left(C_{\bM} + C_{\bR}\right)}{C_2} \sqrt{(r+1)\log p - \log C_1} n^{-2/5},
	\end{equation*}
	where $d$ denotes the maximum node degree of the graph. 
	
	Then, for any $r >0$, we have
	\begin{align}
		\label{eq:re-0-notin}
		\begin{split}
			&\abs{\hat{\bOmega}^0  - \bOmega^0 }_\infty 
			\\\leq & \frac{3\left(C_{\bM} + C_{\bR}\right)^2}{C_2} \sqrt{(r+1)\log p - \log C_1} n^{-2/5}
			\\&+\frac{C_{\bM}}{\sqrt{2}} \sqrt{(r+2)\log p + \log 2}n^{-1/2},
		\end{split}
	\end{align}
	with probability larger than $1- 2p^{-r}$.
\end{theorem}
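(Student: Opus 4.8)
\emph{Proof proposal.} The plan is to exploit the identifiability constraint $\sum_{i=1}^n \hat{\bR}^i = \bm{0}$, which forces $\hat{\bOmega}^0 = \frac{1}{n}\sum_{i=1}^n (\hat{\bOmega}^0 + \hat{\bR}^i)$; that is, the estimate of the common part equals the empirical average of the per-observation precision estimates $\hat{\bOmega}(g^i) := \hat{\bOmega}^0 + \hat{\bR}^i$. Writing $\bar{\bR} = \frac{1}{n}\sum_{i=1}^n \bR(g^i)$ and using $\EE(\bOmega(G)) = \bOmega^0$ (equivalently $\EE(\bR(G)) = \bm{0}$) from Assumption~\ref{ass:2}, I would decompose
\begin{equation*}
\hat{\bOmega}^0 - \bOmega^0 = \frac{1}{n}\sum_{i=1}^n\paran{\hat{\bOmega}(g^i) - \bOmega(g^i)} + \bar{\bR}.
\end{equation*}
The two summands correspond exactly to the two terms of \eqref{eq:re-0-notin}: a CLIME-type estimation error of order $n^{-2/5}$, and a fluctuation of the empirical mean of $\bR(G)$ around zero of order $n^{-1/2}$.

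First I would control the local covariance estimator by bounding $\max_{i\in[n]}\abs{\bS^i - \bSigma(g^i)}_\infty$. This splits into a smoothing bias, handled by the second-derivative bounds on $\bSigma(\cdot)$ (and $\bmu(\cdot)$) in Assumption~\ref{ass:der} together with the symmetry and Lipschitz kernel conditions of Assumption~\ref{ass:kernel}, giving a bias of order $h^2$; and a stochastic part, handled by the sub-Gaussianity of $\bZ \mid G$ in Assumption~\ref{ass:1}, giving fluctuations of order $\sqrt{\log p/(nh)}$ uniformly over entries and observations after a union bound. With $h \asymp n^{-1/5}$ both contributions become of order $\sqrt{\log p}\,n^{-2/5}$, while the density lower bound $C_f$ in Assumption~\ref{ass:1} keeps the random kernel weights well behaved. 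I expect this kernel concentration — for a locally weighted average of heterogeneous (different $\bSigma(g^i)$) sub-Gaussian outer products, including the bias contributed by the nonzero conditional mean $\bmu(\cdot)$ — to be the main technical obstacle.

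On the event $\max_i\abs{\bS^i - \bSigma(g^i)}_\infty \le \lambda/(C_{\bM}+C_{\bR})$, I would show that the shifted truth $\tilde{\bM} = \bOmega^0 + \bar{\bR}$, $\tilde{\bR}^i = \bR(g^i) - \bar{\bR}$ is feasible: it satisfies $\sum_i\tilde{\bR}^i = \bm{0}$ and $\tilde{\bM}+\tilde{\bR}^i = \bOmega(g^i)$, so that $\abs{\bS^i(\tilde{\bM}+\tilde{\bR}^i)-\bI}_\infty = \abs{(\bS^i-\bSigma(g^i))\bOmega(g^i)}_\infty \le \norm{\bOmega(g^i)}_{L_1}\abs{\bS^i-\bSigma(g^i)}_\infty \le \lambda$ by Assumption~\ref{ass:bound-mr}. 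Optimality of the $L_1$ objective against this feasible point then bounds the averaged norm $\frac{1}{n}\sum_i\norm{\hat{\bOmega}(g^i)}_{L_1} \le C_{\bM}+C_{\bR}+2\norm{\bar{\bR}}_{L_1}$. Inserting the CLIME identity $\hat{\bOmega}(g^i)-\bOmega(g^i) = \bOmega(g^i)\paran{\bSigma(g^i)\hat{\bOmega}(g^i)-\bI}$, bounding $\abs{\bSigma(g^i)\hat{\bOmega}(g^i)-\bI}_\infty \le \lambda + \abs{\bS^i-\bSigma(g^i)}_\infty\norm{\hat{\bOmega}(g^i)}_{L_1}$ via feasibility of $\hat{\bOmega}(g^i)$ plus the kernel bound, and averaging over $i$ yields $\frac{1}{n}\sum_i\abs{\hat{\bOmega}(g^i)-\bOmega(g^i)}_\infty$ of order $(C_{\bM}+C_{\bR})\lambda = (C_{\bM}+C_{\bR})^2\sqrt{\log p}\,n^{-2/5}$, matching the first term of \eqref{eq:re-0-notin}. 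A subtlety here is that the shared objective couples the samples and blocks a clean per-observation CLIME bound, so working with the empirical average throughout is essential; the condition $n \ge C_2 d^{5/2}(\log p)^{5/4}$ and the magnitude condition involving $C_1$ are what make the calibrated $\lambda$ large enough for feasibility while pinning the union-bound failure probability at the stated level.

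Finally I would bound $\abs{\bar{\bR}}_\infty$ entrywise: each $\bar{\bR}_{jk} = \frac{1}{n}\sum_i \bR_{jk}(g^i)$ is an average of i.i.d.\ mean-zero summands bounded by $C_{\bR}$ via Assumption~\ref{ass:bound-mr}, so a Hoeffding/Bernstein bound with a union over the $O(p^2)$ entries gives $\abs{\bar{\bR}}_\infty \le \frac{C_{\bM}}{\sqrt{2}}\sqrt{(r+2)\log p + \log 2}\,n^{-1/2}$ with the required probability, recovering the second term of \eqref{eq:re-0-notin}. Combining the two displays through the triangle inequality and intersecting the two high-probability events by a union bound then yields the claimed bound with probability at least $1-2p^{-r}$.
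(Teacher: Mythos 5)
Your proposal is correct and follows essentially the same route as the paper: the identity $\hat{\bOmega}^0=\tfrac{1}{n}\sum_i(\hat{\bOmega}^0+\hat{\bR}^i)$ and the decomposition through $\bOmega^0+\bar{\bR}=\tfrac{1}{n}\sum_i\bOmega(g^i)$ is exactly the paper's intermediate quantity $\bM$, your uniform kernel concentration is its Lemma~\ref{lem:s-1}, your feasibility-plus-CLIME-identity argument is its Lemma~\ref{lem:consistency-M}, and your Hoeffding bound on $\bar{\bR}$ with the union bound reproduces \eqref{eq:concentration-M} and the $1-2p^{-r}$ probability. The only differences are cosmetic bookkeeping (you carry the averaged $L_1$ bound explicitly rather than splitting $\abs{\sum_i(\bM+\bR^i-\hat{\bOmega}^0-\hat{\bR}^i)\be_j}_\infty$), so no further comparison is needed.
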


In words, Theorem~\ref{thm:consistency-M} indicates that, with a high probability, the estimation error is bounded by 
$O\left(\sqrt{\frac{\log p}{n^{4/5}}}\right)$. Inevitably, it is slightly larger than 
$O\left(\sqrt{\frac{\log p}{n}}\right)$, i.e, the non-asymptotic sample complexity of CLIME with $n$ i.i.d. data.
Therefore, even in the presence of confounding, we can consistently and efficiently estimate
the underlying precision matrix $\bOmega^0$ that encodes the true connectivity pattern, 
with a convergence rate comparable to CLIME without any confounding.

\subsection{PROOF SKETCH}
We provide a rough sketch idea on the proof for Theorem \ref{thm:consistency-M}, and defer the details to the appendix.

To begin with, we need the following Lemma \ref{lem:s-1} on the convergence of $\bS(g)$ to $\bSigma(g)$,
since the derived estimator $\hat{\bOmega}^0$ is highly dependent on the nonparametric estimator $\bS(g)$.

\begin{lemma} \normalfont
	\label{lem:s-1}
	Suppose that the Assumption~\ref{ass:1} and Assumption \ref{ass:bound-mr} to \ref{ass:der} in Section~\ref{sec:assumptions} are satisfied, and that $C_1$ and $C_2$ are defined in Theorem~\ref{thm:consistency-M}. Then, for any $r>0$, and
	\begin{equation*}
	\delta = \frac{\sqrt{ (r+1) \log p - \log C_1}}{C_2}n^{-2/5},
	   \end{equation*}
	   the difference between $\bS(g)$ and $\bSigma(g)$ can be bounded in probability by
	\begin{align}
		\label{eq:s-1}
		\begin{split}
			\Pr{ \sup_{g} \abs{\bS(g) - \bSigma(g)}_\infty\ge \delta } \le p^{-r}.
		\end{split}
	\end{align}
\end{lemma}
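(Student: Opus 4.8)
The plan is to carry out a standard bias--variance analysis of the kernel covariance estimator, reduced to the level of individual matrix entries and then made uniform over $g$. Writing $W_i(g) = w_i(g)/\sum_{i'} w_{i'}(g)$, I would first decompose
\[
\bS(g) - \bSigma(g) = \Big( \bS(g) - \EE[\bS(g) \mid g^1,\dots,g^n] \Big) + \Big( \EE[\bS(g) \mid g^1,\dots,g^n] - \bSigma(g) \Big),
\]
isolating a stochastic (variance) term and a deterministic (bias) term, and bound each entrywise. Since the target is the conditional covariance, I would further split $\bz^i = \bmu(g^i) + \bdelta^i$ with $\EE[\bdelta^i \mid g^i] = 0$, so that the conditional second moment of $\bdelta^i$ equals $\bSigma(g^i)$; the contribution of the mean part $\bmu(\cdot)\bmu(\cdot)^\top$ and the cross terms are then controlled through the smoothness of $\bmu$ in Assumption~\ref{ass:der}.

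For the bias term, I would Taylor expand $\bSigma(\cdot)$ (and the relevant mean quantities) around $g$ to second order. The symmetry of $\psi$ kills the first-order term, the second derivative bound $\sup_g \abs{\frac{d^2}{dg^2}\bSigma(g)}_\infty \le C_{\bSigma}$ in Assumption~\ref{ass:der} controls the remainder, and the regularity of the design density $f$ from Assumption~\ref{ass:1} (its lower bound $C_f$ and the Lipschitz condition on $f'$) ensures the effective weights behave like a proper local average. With the bandwidth $h \asymp n^{-1/5}$ from Assumption~\ref{ass:kernel}, this yields a bias of order $h^2 = O(n^{-2/5})$, matching the leading behavior of $\delta$.

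For the stochastic term, I would note that under Assumption~\ref{ass:1} the entries of $\bdelta^i(\bdelta^i)^\top$ are sub-exponential with parameters governed by the variance proxy $v$ and the eigenvalue bounds $\Lambda_\infty$ from Assumption~\ref{ass:bound-mr}. Fixing an entry $(j,j')$ and a point $g$, the weighted sum $\sum_i W_i(g)\,[\bdelta^i(\bdelta^i)^\top]_{jj'}$ concentrates around its conditional mean via a Bernstein-type inequality, with effective sample size of order $nh$; the denominator $\sum_i w_i(g)$ is bounded below uniformly by combining $\inf f \ge C_f$ with its own concentration. This gives a deviation of order $\sqrt{\log p/(nh)} = O(\sqrt{\log p}\,n^{-2/5})$ after a union bound over the $p^2$ entries, again matching $\delta$.

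The main obstacle is upgrading the pointwise bound to the uniform statement over $g$. I would handle this with a discretization argument: cover the compact support of $G$ by an $\varepsilon$-grid of cardinality polynomial in $n$, apply the entrywise concentration and a union bound at each grid point, and then control the oscillation of $g \mapsto \bS(g) - \bSigma(g)$ between grid points using the $C_L$-Lipschitz property of $\psi$ (Assumption~\ref{ass:kernel}) together with the smoothness of $\bSigma$ and the lower bound on the denominator. Choosing $\varepsilon$ small enough that the inter-grid oscillation is negligible relative to $\delta$, and folding the grid cardinality into the $\log p$ factor, would deliver the uniform bound, showing that the bad event $\{\sup_g \abs{\bS(g)-\bSigma(g)}_\infty \ge \delta\}$ has probability at most $p^{-r}$. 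Tracking the constants so that the exponent yields exactly $p^{-r}$ and the prefactor reproduces $\delta$ is the most delicate bookkeeping.
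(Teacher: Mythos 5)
The paper does not actually carry out this argument: its entire proof is a citation to Lemma~8 of \cite{wang2014inference}, from which it takes the tail bound $\Pr\{\sup_g \abs{\bS(g)-\bSigma(g)}_\infty \ge \epsilon\} \le C_1\exp(-C_2 n^{4/5}\epsilon^2 + \log p)$ and then inverts it at $\epsilon=\delta$. What you have written is essentially a reconstruction of the argument that underlies that cited lemma --- bias--variance split, second-order Taylor expansion with the first-order term killed by the symmetry of $\psi$, Bernstein-type concentration with effective sample size $nh$, a union bound over the $p^2$ entries, and an $\varepsilon$-net over the compact support of $G$ to get uniformity --- so in substance you are on the same route, just self-contained where the paper defers to a reference. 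Your reading of the bandwidth as $h\asymp n^{-1/5}$ (balancing the bias $h^2$ against the stochastic term $\sqrt{\log p/(nh)}$, both of order $n^{-2/5}$) is the correct one; the literal $h=O(n^{-5})$ in Assumption~\ref{ass:kernel} cannot be what is meant, since it would make the effective sample size $nh$ vanish.

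One step of your sketch does not close as written. The estimator $\bS(g)=\sum_i W_i(g)\,\bz^i(\bz^i)^\top$ is an \emph{uncentered} second-moment average, so its population target is $\EE[\bZ\bZ^\top\mid G=g]=\bSigma(g)+\bmu(g)\bmu(g)^\top$, not $\bSigma(g)$. After your split $\bz^i=\bmu(g^i)+\bdelta^i$, the term $\sum_i W_i(g)\bmu(g^i)\bmu(g^i)^\top$ converges to $\bmu(g)\bmu(g)^\top$; the smoothness of $\bmu$ in Assumption~\ref{ass:der} controls only how this contribution \emph{varies} in $g$, not its \emph{size}, so it cannot make the term negligible relative to $\delta=O(\sqrt{\log p}\,n^{-2/5})$ unless $\bmu\equiv 0$. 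You need either to assume $\bmu\equiv 0$ explicitly (which the paper implicitly does, since the sub-Gaussian tail bound in Assumption~\ref{ass:1} is centered at zero) or to work with a centered local covariance estimator. With that caveat resolved, the rest of your outline --- including the covering argument whose grid cardinality is polynomial in $n$ and hence absorbed into the $\log p$ factor under the sample-size condition $n\gtrsim d^{5/2}(\log p)^{5/4}$ --- is the standard and correct way to obtain \eqref{eq:s-1}.
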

\begin{proof}\small
	By following the rationale of the Lemma 8 in \cite{wang2014inference}, we can derive
	\begin{align*}
	\begin{split}
	\Pr& \left\{ \sup_{g} \abs{\bS(g) - \bSigma(g)}_\infty\ge \epsilon\right\} 
	\\&\le C_1 \exp(-C_2n^{4/5}\epsilon^2 + \log p).
	\end{split}
	\end{align*}
	Then, by letting $\epsilon = \sqrt{\frac{\log(p^{r+1} C_1^{-1})}{C_2 n^{4/5}}}$, we complete the proof.
\end{proof}

Lemma~\ref{lem:s-1} provides a uniform convergence result over $g$ of the local covariance estimator. 

Then, instead of directly studying $\abs{\hat{\bOmega}^0 - \bOmega^0}_\infty$, we start by bounding $\abs{\hat{\bOmega}^0 - \bM}_\infty$, where 
\begin{equation*}
\bM = \frac{\sum_{i \in [n]} \bOmega(g^i) }{n}.
\end{equation*}
Specifically, with Lemma \ref{lem:s-1}, we derive Lemma~\ref{lem:consistency-M} on the relationship between $\hat{\bOmega}^0$ and $\bM$.

\begin{figure*}[t]
	\centering
	\begin{minipage}[b]{0.28\linewidth}
		\centering
		\includegraphics[scale=0.25]{./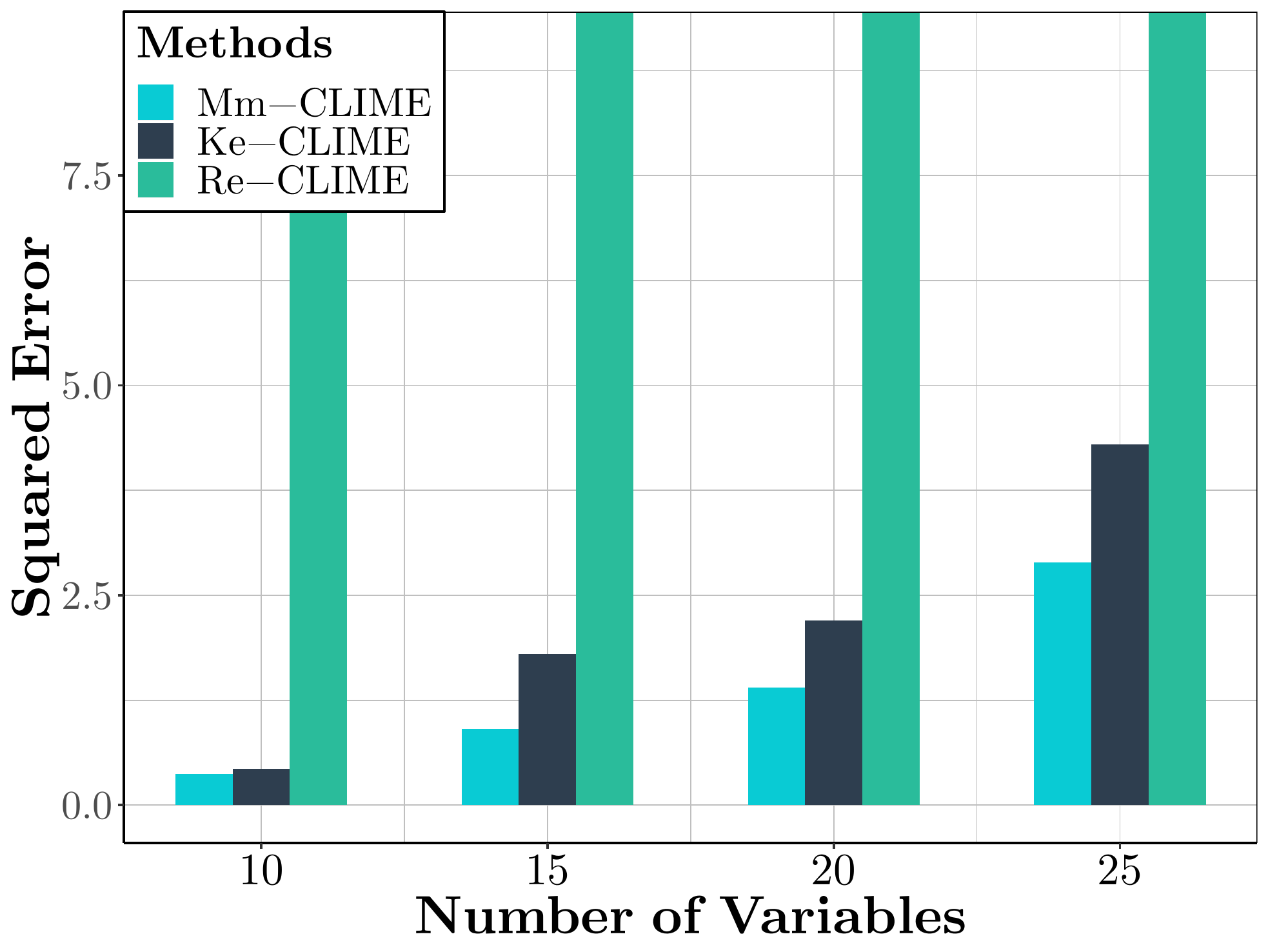}
		\caption{ Squared error for the considered methods with the selected $\lambda$'s using synthetic datasets. }
		\label{fig:auc-all-one}
	\end{minipage}
	\hspace{2mm}
	\centering 
	\vrule
	\hspace{2mm}
	\begin{minipage}[b]{0.6\linewidth}
		\begin{subfigure}{0.29\linewidth}
			\includegraphics[scale=0.23]{./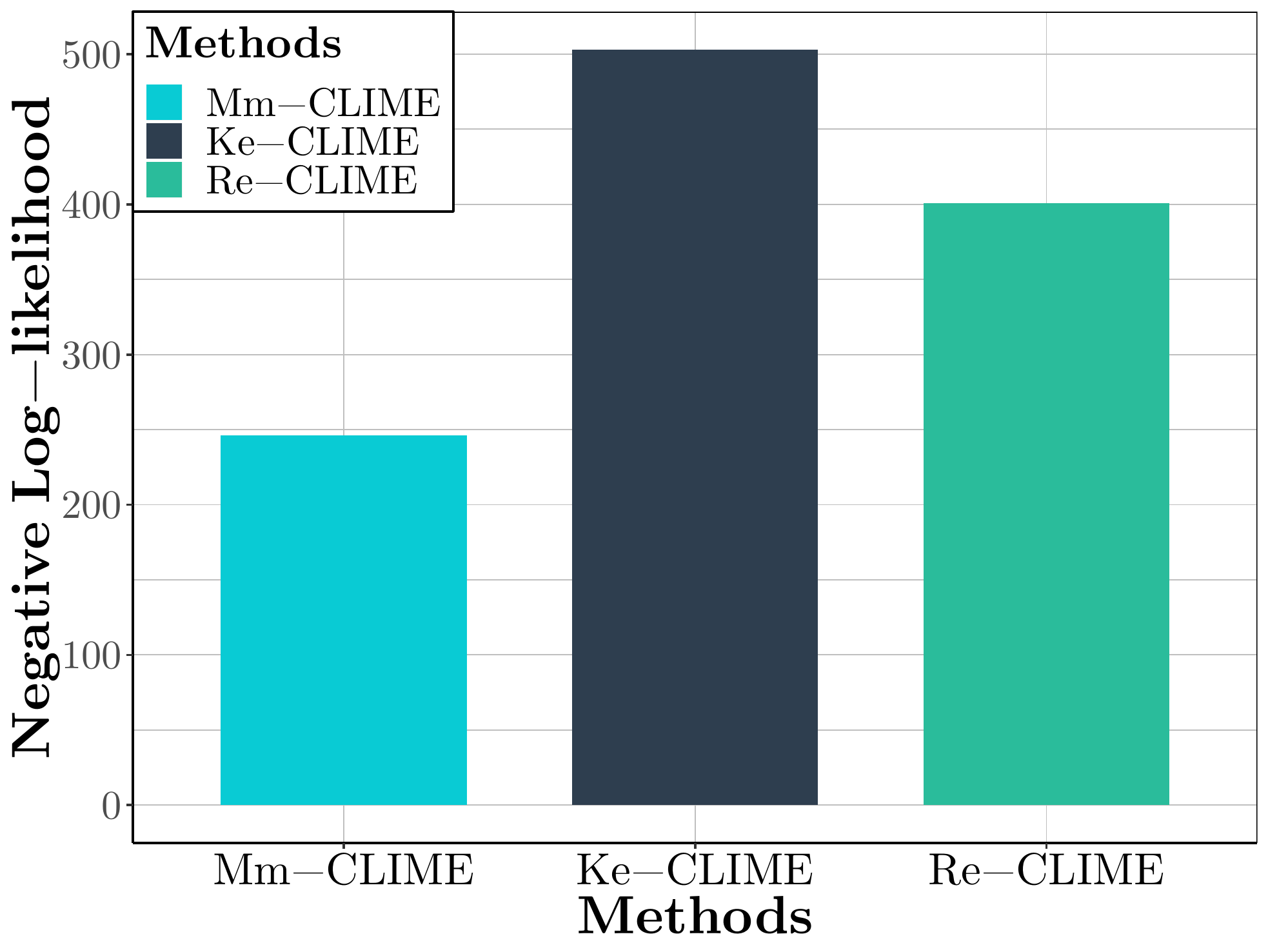}
		\end{subfigure}
		\hspace{20mm}
		\centering
		\begin{subfigure}{0.29\linewidth}
			\includegraphics[scale=0.23]{./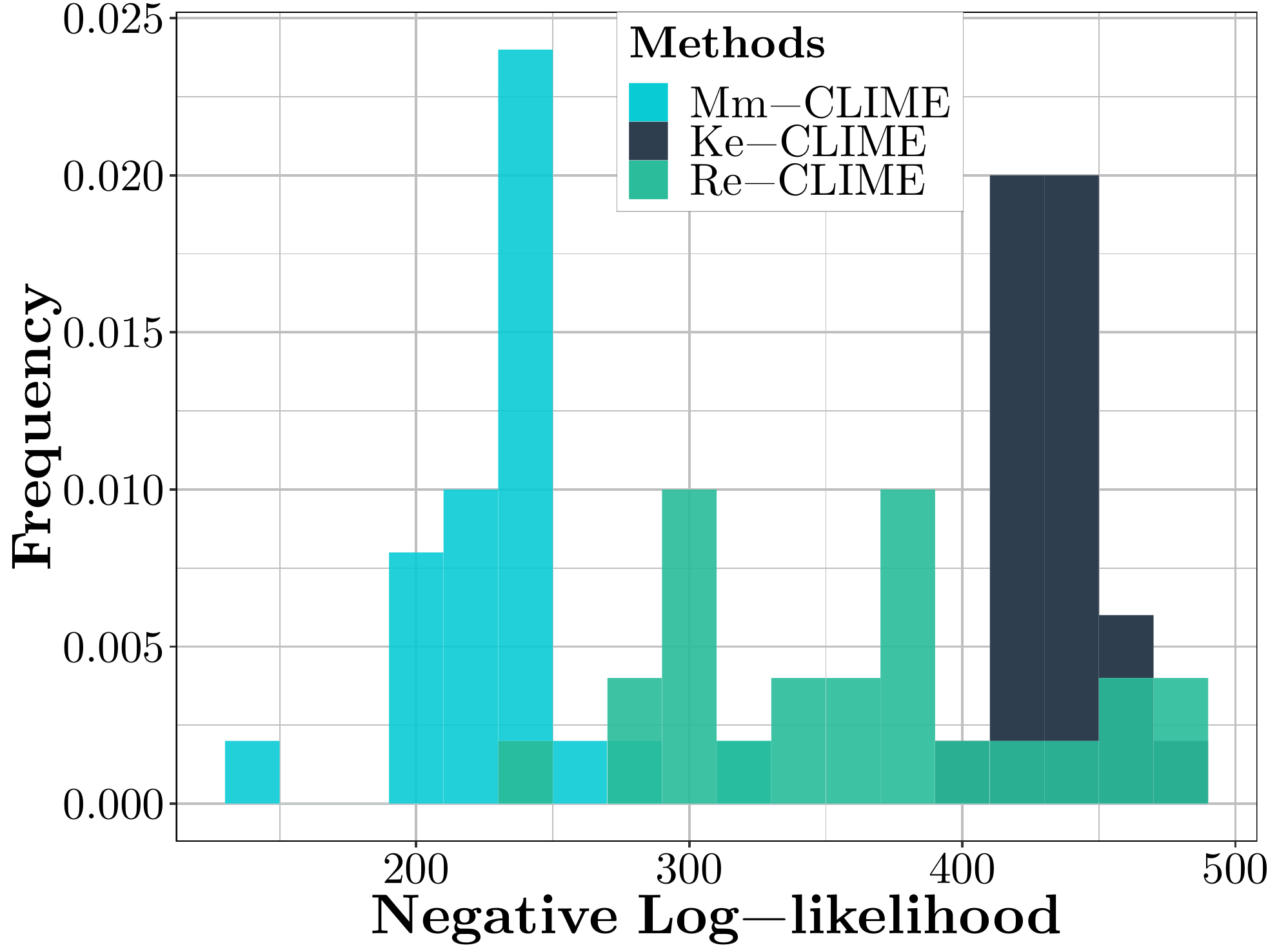}
		\end{subfigure}
		\hspace{5mm}
		\caption{Barplot of estimated negative log-likelihoods using the selected $\lambda$'s, histogram of the considered methods with all $\lambda$'s, using the COBRE dataset.}
		\label{fig:real-world-all}
	\end{minipage}
	\vspace{2mm}
\end{figure*}
\begin{figure}[t]
    \centering
    \includegraphics[scale=0.7]{./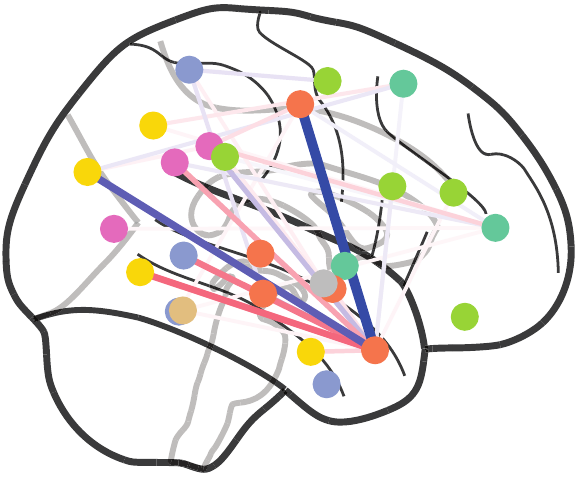}
    \caption{Glass brain figure of the estimated precision matrix using Mm-CLIME on the COBRE dataset (see manuscript for details).}
    \label{fig:glass-brain}
\end{figure}

\begin{figure*}[t]
		\centering
		\includegraphics[scale=0.23]{./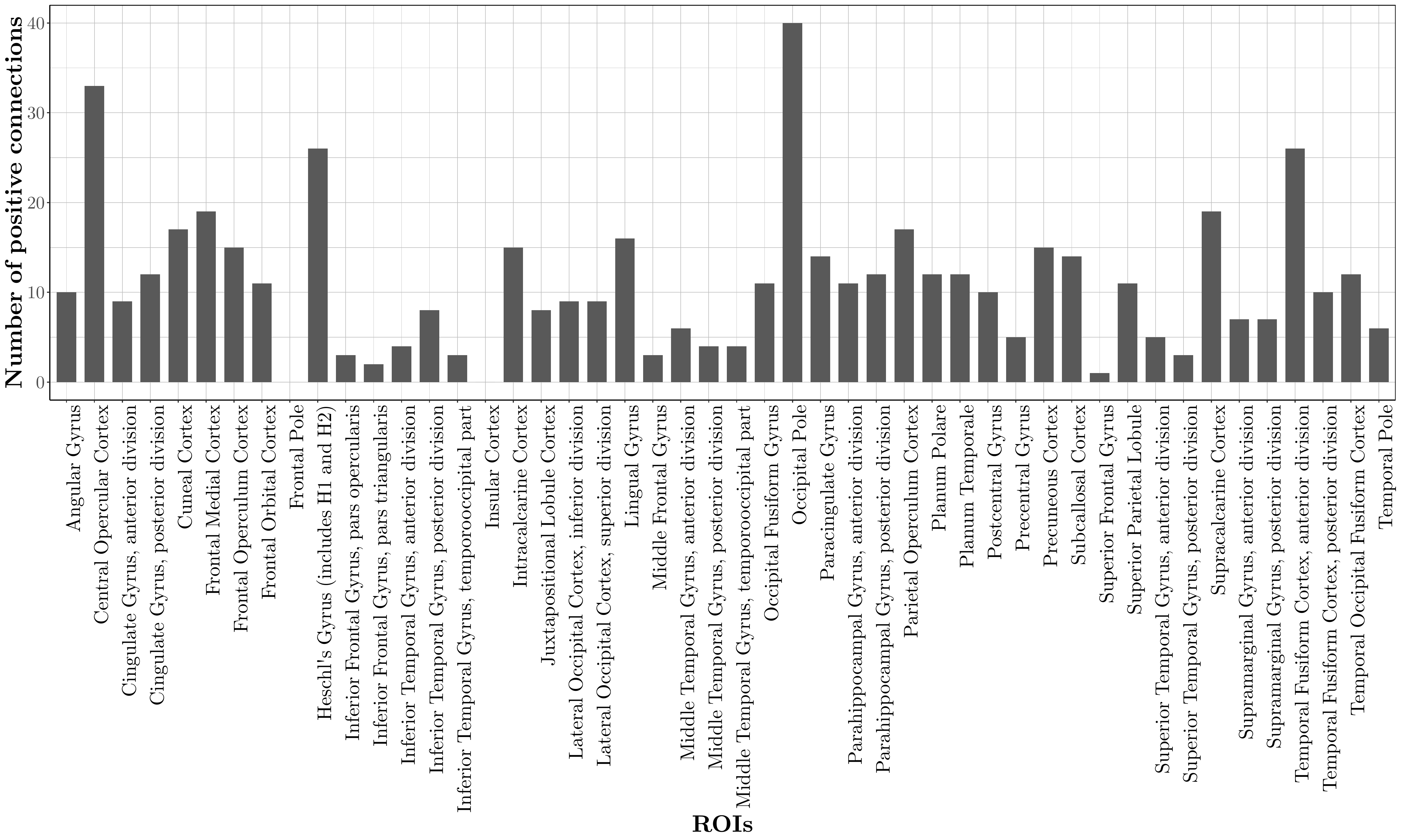}
		\caption{Number of positive connections across ROIs of the subjects diagnosed with schizophrenia recovered by the proposed Mm-CLIME method.}
		\label{fig:positive-connections-ex}
\end{figure*}~\begin{figure*}[t]
		\centering
		\includegraphics[scale=0.23]{./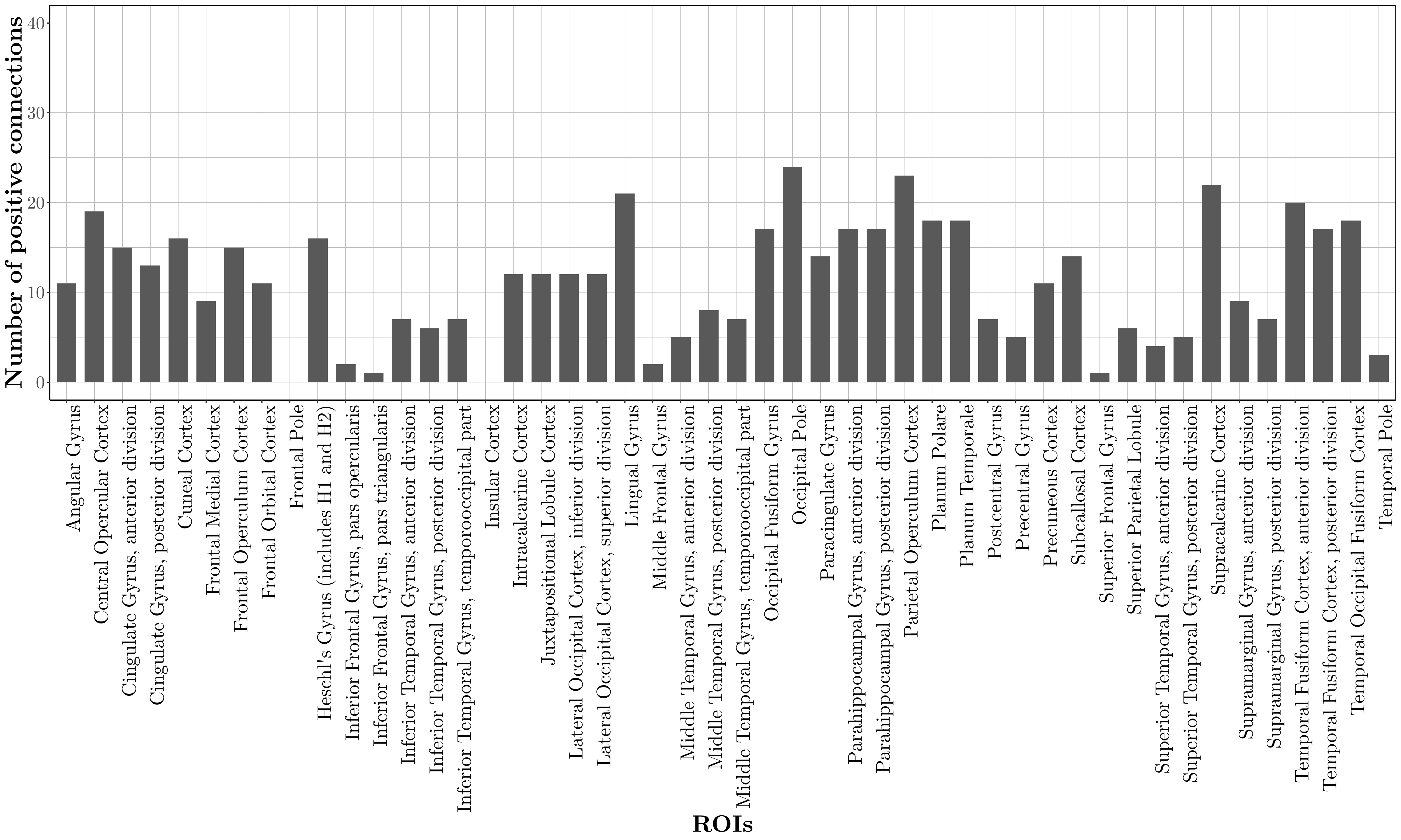}
		\caption{Number of positive connections across ROIs of the control subjects recovered by the proposed Mm-CLIME method.}
		\label{fig:positive-connections-co}
\end{figure*}

\begin{lemma}
	\label{lem:consistency-M}\normalfont
	Suppose that the conditions in Theorem \ref{thm:consistency-M} are satisfied. Then, for any $r >0$,
	\begin{align}
		\label{eq:re-0-notin-1}
		\begin{split}
			\abs{\hat{\bOmega}^0  - \bM}_\infty\qquad &
			\\ \qquad\le \frac{3\left(C_{\bM} + C_{\bR}\right)^2}{C_2}& \sqrt{ (r+1) \log p - \log C_1} n^{-2/5},
		\end{split}
	\end{align}
	with probability larger than $1- p^{-r}$.
\end{lemma}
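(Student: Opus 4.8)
The plan is to adapt the feasibility/primal argument behind CLIME \citep{Cai2011Constrained} to the joint program \eqref{eq:estimation}--\eqref{eq:constraint}, treating Lemma~\ref{lem:s-1} as the only probabilistic input and doing everything else deterministically on the event $\mathcal{E} = \{\sup_g \abs{\bS(g) - \bSigma(g)}_\infty < \delta\}$, which has probability at least $1 - p^{-r}$. The key new feature relative to CLIME is that the per-observation residuals $\hat{\bR}^i$ are free variables tied together only through the objective and the constraint $\sum_i \bR^i = \bm{0}$, so the whole analysis must be phrased in terms of averages over $i$.

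First I would exhibit a feasible point built from the truth: take $\bM^{*} = \bM = \tfrac1n\sum_i \bOmega(g^i)$ and $\bR^{*i} = \bOmega(g^i) - \bM$. Then $\sum_i \bR^{*i} = \bm{0}$ by construction, and $\bM^{*} + \bR^{*i} = \bOmega(g^i) = \bSigma(g^i)^{-1}$, so on $\mathcal{E}$,
\[
\abs{\bS^i(\bM^{*}+\bR^{*i}) - \bI}_\infty = \abs{(\bS^i - \bSigma(g^i))\bOmega(g^i)}_\infty \le \abs{\bS^i - \bSigma(g^i)}_\infty\,\norm{\bOmega(g^i)}_{L_1} \le \delta(C_{\bM} + C_{\bR}),
\]
using Assumption~\ref{ass:bound-mr} through $\norm{\bOmega(g^i)}_{L_1} \le \norm{\bOmega^0}_{L_1} + \norm{\bR(g^i)}_{L_1} \le C_{\bM} + C_{\bR}$. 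Since $\lambda = \delta(C_{\bM}+C_{\bR})$ by the choice in Theorem~\ref{thm:consistency-M}, this candidate is feasible on $\mathcal{E}$, and optimality of $(\hat{\bOmega}^0, \{\hat{\bR}^i\})$ gives
\[
\norm{\hat{\bOmega}^0}_{L_1} + \frac1n\sum_i \norm{\hat{\bR}^i}_{L_1} \le \norm{\bM}_{L_1} + \frac1n\sum_i \norm{\bOmega(g^i) - \bM}_{L_1},
\]
which I would bound by a constant multiple of $C_{\bM}+C_{\bR}$ to control the average $L_1$ mass $\tfrac1n\sum_i \norm{\hat{\bOmega}^0 + \hat{\bR}^i}_{L_1}$ via the triangle inequality.

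The core is an exact identity obtained by inverting each local covariance. Because $\bOmega(g^i)\bSigma(g^i) = \bI$, for every $i$ we have $\hat{\bOmega}^0 + \hat{\bR}^i - \bOmega(g^i) = \bOmega(g^i)[\bSigma(g^i)(\hat{\bOmega}^0 + \hat{\bR}^i) - \bI]$; averaging over $i$ and using $\sum_i \hat{\bR}^i = \bm{0}$ together with the definition of $\bM$ collapses the left side to
\[
\hat{\bOmega}^0 - \bM = \frac1n\sum_{i=1}^n \bOmega(g^i)\left[\bSigma(g^i)(\hat{\bOmega}^0 + \hat{\bR}^i) - \bI\right].
\]
Taking $\abs{\cdot}_\infty$, using $\abs{\bOmega(g^i)C}_\infty \le \norm{\bOmega(g^i)}_{L_1}\abs{C}_\infty$ (valid since precision matrices are symmetric), and splitting $\bSigma(g^i)(\hat{\bOmega}^0+\hat{\bR}^i) - \bI = (\bSigma(g^i)-\bS^i)(\hat{\bOmega}^0+\hat{\bR}^i) + (\bS^i(\hat{\bOmega}^0+\hat{\bR}^i) - \bI)$ reduces the bound to two pieces: the kernel-error piece, controlled by $\delta$ and the average $L_1$ mass from the previous step, and the residual piece, at most $\lambda$ by feasibility of the optimum. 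Substituting $\lambda = \delta(C_{\bM}+C_{\bR})$ and $\delta = C_2^{-1}\sqrt{(r+1)\log p - \log C_1}\,n^{-2/5}$ then yields the stated bound on $\mathcal{E}$.

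The hard part will be the $L_1$-mass bookkeeping that produces the sharp constant. Unlike vanilla CLIME, whose fixed target has an $L_1$ norm bounded directly by assumption, here the inversion identity forces $\norm{\hat{\bOmega}^0 + \hat{\bR}^i}_{L_1}$ to appear under the average, so the optimality inequality must feed back exactly the factor needed to collapse $\delta\cdot\tfrac1n\sum_i\norm{\hat{\bOmega}^0+\hat{\bR}^i}_{L_1} + \lambda$ into $3(C_{\bM}+C_{\bR})^2\delta$. I expect this to require the tight representation $\bOmega(g^i) - \bM = \bR(g^i) - \tfrac1n\sum_{i'}\bR(g^{i'})$ (rather than a crude triangle bound) when bounding the optimality right-hand side, and a careful merging of the additive $\lambda$ term into the $\delta$-scaled term. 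A secondary point worth checking is that Lemma~\ref{lem:s-1} is invoked in its uniform-over-$g$ form, so that all $n$ constraints hold simultaneously on the single event $\mathcal{E}$ and the failure probability stays $p^{-r}$ rather than $n\,p^{-r}$.
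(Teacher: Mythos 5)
Your proposal is correct and follows essentially the same route as the paper's own proof: the same feasible point $(\bM, \{\bOmega(g^i)-\bM\})$, the same use of optimality to control the average $L_1$ mass of $(\hat{\bOmega}^0,\{\hat{\bR}^i\})$, the same inversion identity averaged over $i$ with $\sum_i\hat{\bR}^i=\bm{0}$, and the same split into a kernel-error term and a $\lambda$-feasibility term. You also correctly anticipate the only delicate point, namely the $L_1$ bookkeeping of $\norm{\bOmega(g^i)-\bM}_{L_1}$ needed to land on the constant $3(C_{\bM}+C_{\bR})^2$.
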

The proof of Lemma~\ref{lem:consistency-M} is deferred to the appendix.

Finally, we bound $\abs{\bM - \bOmega^0}_\infty$.
	According to Lemma \ref{lem:consistency-M}, $\hat{\bOmega}^0$ converges to $\bM$. Therefore, we can prove the consistency of $\hat{\bOmega}^0$ by studying the relationship between $\bM$ and $\bOmega^0$. Specifically, by Hoeffding inequality, we have 
	\begin{equation}
	\label{eq:concentration-M}
	\Pr{\abs{\bOmega^0-\bM}_\infty \geq C_{\bM} \sqrt{\log(2p^{r+2})}(2n)^{-1/2}} \leq p^{-r},
	\end{equation}
	for any $r>0$.
	
	Combine \eqref{eq:concentration-M} with Lemma \ref{lem:consistency-M}, we have 
	\begin{align*}
	\begin{split}
	&\abs{\hat{\bOmega}^0  - \bOmega^0 }_\infty 
	\\\leq &\frac{3\left(C_{\bM} 
	+ C_{\bR}\right)^2}{C_2} \sqrt{\log(p^{r+1}) - \log(C_1)} n^{-2/5} \\&+\frac{C_{\bM}}{\sqrt{2}} \sqrt{\log(p^{r+2}) + \log(2)}n^{-1/2},
	\end{split}
	\end{align*}
	with probability $1-2p^{-r}$.

\section{EXPERIMENTS}
\label{sec:experiments}

In what follows, we will demonstrate that the proposed method efficiently recovers the target precision matrix when applied to synthetic data in Section~\ref{sec:syn-data}. Then, to illustrate that the proposed model is readily applicable to practical analysis of brain connectivity, we apply it to a resting-state functional magnetic resonance imaging dataset collected for the study of schizophrenia.    

\subsection{SYNTHETIC EXPERIMENTS}
\label{sec:syn-data}
In this section, we compare the proposed model with some existing models using synthetic data. Specifically, we consider a generative model where the underlying precision matrix varies smoothly with respect to the confounder variable $G$. Then, we generate samples with varying precision matrices via the following procedure:

	\begin{enumerate}
	\item We set the length of the multivariate random variable $\bZ$ to $p = 10, 15, 20, 25$, and implement the following steps separately.  
	\item We randomly generate $11$ precision matrices as anchors, 
	denoted by $\bOmega^{1}, \bOmega^{11}, \bOmega^{21},\bOmega^{31},\cdots, \bOmega^{101}$.
	Specifically, each element of the anchor precision matrices is drawn randomly to be non-zero
	with probability 0.2. The values of non-zero elements follow a uniform distribution. 
	\item Precision matrices between every two consecutive anchor precision matrices are constructed by linear interpolation.
	\item For each precision matrix, we generate $2$ independent zero-mean multivariate Gaussian samples,
	which constitute the synthetic dataset, $\mathbb{Z}$. 
	\item We set the target precision matrix as $\bOmega^0 = \frac{\sum_{i=1}^{101}\bOmega^{i} }{101}$,
	and the $\bOmega^0$ is thresholded for the sparsity. 
	\item The considered methods are applied to $\mathbb{Z}$ to estimate $\bOmega^0$.
\end{enumerate}

Note that by following the procedure above, the simulated model is equivalent to a generative model 
with fixed precision $\bOmega^0$ and additive confounding $\bR^i = \bOmega^0 - \bOmega^i$, 
where the superscripts indicate the confounder variable $G$.

The proposed method is based on a movement modeling method, 
and thus is denoted by Mm-CLIME. According to the analysis in Section~\ref{sec:model}, 
the competing methods fall into three categories: 
the precision matrix estimation with confounding (which is designed exactly for our problem setting), 
time-varying precision matrix estimation \citep{Zhou08time,kolar2011time,yin10nonparametric,kolar10nonparametric,Lu2015Posta,Kolar2010Estimating},
and multiple precision matrix estimation \citep{Chiquet2011Inferring,Guo2011Joint,Danaher2011Joint,Mohan2014Node,lee2015joint}. 
We also study two benchmarks: Re-CLIME and Ke-CLIME from the first two categories.

The baseline Re-CLIME refers to the procedure that linearly regresses 
out the confounding by $G$ from the observed samples first, and then
uses CLIME to estimate the precision matrix. As a precision matrix estimation method 
with confounding, Re-CLIME is widely applied to brain
functional connectivity analysis in practice \citep{van2012influence, power2014methods}.
Furthermore, we consider Ke-CLIME; a CLIME 
version of the method studied in \citep{kolar2011time} as a 
representative of time-varying precision matrix estimation. In this case, 
CLIME is separately applied to kernel estimators of the covariance matrix for each observation,
which results in $101$ estimated precision 
matrices, i.e.,  estimates of the time-varying precision matrices. 
Then, we use the average of the precision matrices as an estimator of $\bOmega^0$. 

To strike a fair comparison, we only select CLIME-based methods from each category of techniques. 
Otherwise, whether the efficiency gain of JNE comes from 
the joint estimation or CLIME will be unclear. Also, as suggested
by \cite{Cai2011Constrained}, CLIME has at least comparable performance to the graphical Lasso.

Multiple precision matrix estimation methods are not included due to numerical issues. 
We observe that such methods require sufficient samples for the estimation of each 
precision matrix. Empirical evaluation of one such approach~\cite{lee2015joint} most 
often resulted in ill-defined optimization problems, since we may have as few as two observations 
for each precision matrix.

The solutions obtained by the considered CLIME-based methods are not 
guaranteed to be symmetric or positive definite in general. 
To deal with this, a procedure provided in \cite{Cai2011Constrained} is 
used to symmetrize solution of each
method.

The Epanechnikov kernel is used with the bandwidth parameters selected 
by leave-one-out cross-validation. Since all the  considered 
kernel-based methods use the same estimator of the covariance matrix, 
the same bandwidth are used across all the methods. 
Different bandwidths are used to estimate each component of the covariance matrix, 
resulting in $p  (p-1)/2$ bandwidths.
Values of these bandwidths are provided in Figure~\ref{fig:bw}. 
The regularization parameters ($\lambda$'s) are 
determined by the Akaike information criterion (AIC). 
The selected $\lambda$'s are reported in Table~\ref{tab:parameters}.

The results are summarized in Figure \ref{fig:auc-all-one}. 
Notice that the proposed Mm-CLIME achieves the smallest squared error among 
all the considered methods. Furthermore, although Re-CLIME is widely used in 
practice to analyze the brain connectivity, it performs worst
in terms of accuracy in our experiments. Actually, this is not surprising, 
since the effects of confounders on the observed data are 
seldom linear, both in our synthetic setting and practical problems.  

Compared with Ke-CLIME, the proposed model adds an extra constraint in the optimization program and,
as a result, it can better take advantage of the assumption $E(R^i) = 0$ and  achieves more accurate estimation. 
The main intuition for this advantage is that with our procedure, we can choose a smaller bandwidth parameter $h$, 
which results in smaller bias and better performance compared to simple averaging. 
Note that when averaging the estimated $\Omega(g)$ over $g$ the bias due to kernel estimation 
does not get reduced. Furthermore, estimation of each individual $\Omega(g)$ requires a larger bandwidth, 
resulting in more bias.

\subsection{fMRI EXPERIMENTS}

Functional brain connectivity measures associations between brain areas,
and are thought to reflect communication and coordination between spatially distant 
neuronal populations -- corresponding to information processing pathways in the brain. 
There are a wide variety of techniques for estimating functional connectivity in the 
literature~\citep{sadaghiani2013functional,fornito2013,zalesky2012use,barch2013function}, 
most commonly via the correlation. However, there is increasing evidence that the precision 
matrix results in a more stable and effective connectivity estimate than the 
alternatives~\citep{varoquaux2010brain,poldrack2015long,shine2015estimation,shine2016temporal,andersen2018bayesian}. 
For this reason, we focus our attention on precision matrix estimation. 
We also focus on motion confounding -- one of the most pressing examples of physiological confounding in the literature.

Our experiments use data
from the Center for Biomedical Research Excellence 
(COBRE)\footnote{\url{https://github.com/SIMEXP/Projects/tree/master/cobre}}.
We compare the considered methods using preprocessed resting-state functional magnetic resonance images 
for $70$ patients diagnosed with schizophrenia, and $72$ healthy controls. 
Every patient has more than $150$ fMRI observations each with $22$ 
corresponding confounders. We use the $7$ confounders provided in the dataset
related to motion for the analysis. Furthermore, we apply Harvard-Oxford Atlas to generate 
48 atlas regions of interest (ROIs).

First we split the data from each patient into a training test with $70\%$ of the data and a test set with $30\%$ of the data. Then, we apply the method to training sets to estimate an unconfounded precision matrix for each patient. Further, we treat the observations of each test set as the samples of a multivariate normal distribution with the precision matrix as estimated, and calculate the log-likelihood to compare different methods.
 A glass brain
figure illustrating the achieved precision matrix of a patient with schizophrenia is provided in Figure~\ref{fig:glass-brain}.
Our hypothesis is that the confounding has a similar effect as additive noise. 
Thus, better estimation of the average precision will correspond to higher likelihood. 
The results using the $\lambda$ selected by AIC and the results using all considered $\lambda$'s 
are both summarized in Figure \ref{fig:real-world-all}. As expected, we find that the
proposed method achieves the highest log-likelihood on the held-out dataset, 
which suggests that it results in a better model of the brain connectivity than 
the baselines.

Next, we concatenated all the observations across subjects diagnosed with schizophrenia, and 
and all the healthy controls into two time series, then apply the proposed method separately to each concatenated time series. The results are summarized in
Figure~\ref{fig:positive-connections-ex} and Figure~\ref{fig:positive-connections-co},
where the number of positive connections for each ROI are reported. 
We notice that the connectivity of the ROIs of the subjects with schizophrenia are 
similar to that of the control group, except the Occipital Pole and Central Opercular Cortex 
have abnormally more positive connections with other ROIs, i.e., most highly connected areas. 
Interestingly, these two areas have been implicated in the literature as highly associated with 
schizophrenia~\citep{sheffield2015fronto,tohid2015alterations}. In summary, the proposed method models the fMRI accurately,
and detects the differences between the subjects with schizophrenia and the control group.



\section{Conclusion}
\label{sec:conclusion}
We developed a novel approach for precision matrix estimation where, 
due to extraneous confounding of the underlying precision matrix, 
the data are independent but not identically distributed. For this, 
we proposed a varying graphical model, 
and an associated joint nonparametric estimator. Our technical contributions included 
theoretical consistency and convergence rate guarantees for the proposed estimator, 
and an efficient optimization procedure. 
Empirical results were also presented using simulated and real brain imaging data, 
which suggests that our approach 
improves precision matrix estimation as compared to a variety of baselines. 
For future work, we plan to investigate
more complex hierarchical graphical models 
including more confounder effects, and joint estimation across groups to better estimate shared global structure.

We have focused on the estimation of the precision matrix 
under a specific model of confounding. An interesting future direction
is development of a goodness of fit test that would allow us to verify if 
the model specification is appropriate for data. 
Extending recent work on quantifying uncertainty in estimation
of edge parameters in undirected graphical models 
\cite{Wasserman2014Berry,Ren2013Asymptotic,Wang2016Inference,Barber2015ROCKET,
Yu2016Statistical,Kim2019Two,Yu2019Simultaneous,Jankova2018Inference}
to a setting with confounding is rather challenging and will be pursued elsewhere.

Empirical results were also presented suggesting an improvement for precision matrix estimation. 


\newpage
\subsubsection*{References}
\renewcommand\refname{\vskip -1cm}
{\small
\bibliography{MRI}
}

\newpage\clearpage\onecolumn
\newpage
	\appendix
	\addcontentsline{toc}{section}{Appendices}
	\section*{Supplements}

	\subsection*{Proof for Lemma~\ref{lem:consistency-M}}
	\label{sec:consistency-S}
	According to Lemma~\ref{lem:s-1}, for any $r >0$, 
	\begin{align}
	\begin{split}
	\label{eq:ineq-1}
	\sup_{g} \abs{\bS(g) - \bSigma(g)}_\infty\ge \sqrt{\log(p^{r+1}C_1^{-1})} C_2^{-1} n^{-2/5}
	\end{split}
	\end{align}
	with probability larger than $1- p^{-r}$. 
	
	For the ease of presentation, we now use $\bS^i$ to denote $\bS(g^i)$ and $\bR^i = \bOmega(g^i) - \bM$. Assuming that (\ref{eq:ineq-1}) holds, we can see that $\bM$ and $ \left \{\bR^{i}\right\}_{i=1}^n $ is a pair of feasible solution to (\ref{eq:estimation}) for
	\begin{align*}
	\begin{split}
	\abs{\bI - \bS^{i}\left( \bM + \bR^i \right)}_\infty & = \abs{\left( \bSigma(g^i) - \bS^{i} \right) \left( \bM + \bR^i\right)}_\infty \\
	& \le (C_{\bM} + C_{\bR}) \sqrt{\log(p^{r+1}C_1^{-1})} C_2^{-1} n^{-2/5} 
	\\&= \lambda.
	\end{split}
	\end{align*}
	
	Then, we consider $\abs{\sum_{i=1} ^n \left( \bM + \bR^{i}  - \hat{\bOmega}^0 - \hat{\bR}^{i}\right) \be_j }_\infty$, which satisfies :
	
	\begin{align*}
	\begin{split}
	\abs{\sum_{i=1} ^n \left( \bM + \bR^{i}  - \hat{\bM} - \hat{\bR}^{i}\right) \be_j }_\infty \le &\abs{\sum_{i=1} ^n  \left( \bM + \bR^i\right ) \left( \bSigma^{i} - \bS^{i} \right)\left(\hat{\bOmega}^0 + \hat{\bR}^{i}\right) \be_j }_\infty \\
	&+ \abs{\sum_{i=1} ^n   \left( \bM + \bR^{i}\right ) \left(\bS^{i}\left(\hat{\bOmega}^0 + \hat{\bR}^{i}\right) -\bI \right ) \be_j }_\infty,
	\end{split}
	\end{align*}
	where $\bI$ is the identity matrix. 
	
	For $\abs{\sum_{i=1} ^n  \left( \bM + \bR^i\right ) \left( \bSigma(g^i) - \bS^{i} \right)\left(\hat{\bOmega}^0 + \hat{\bR}^{i}\right) \be_j }_\infty$, according to the Assumption \ref{ass:bound-mr}, we have:
	\begin{align*}
	\begin{split}
	&\abs{\sum_{i=1} ^n  \left( \bM + \bR^i\right ) \left( \bSigma(g^i) - \bS^{i} \right)\left(\hat{\bOmega}^0 + \hat{\bR}^{i}\right) \be_j }_\infty\\
	\le & \norm{\bM} \left \{\sum_{i=1} ^n \abs{ \bSigma(g^i) - \bS^{i}}_\infty  \abs{\hat{\bm{\omega}}_j}_1  + \sum_{i=1} ^n\abs{ \bSigma(g^i) - \bS^{k} }_\infty  \abs{\hat{\br}^{i}_j}_1 \right \}\\
	&+ \sum_{i=1} ^n\norm{\bR^{i}}_{L_1}\abs{ \left(\bSigma(g^i) - \bS^{i} \right)}_\infty  \abs{\hat{\bm{\omega}}_j}_1  +  \sum_{i=1} ^n\norm{\bR^{i}}_{L_1}\abs{ \left(\bSigma^{i} - \bS^{i} \right)}_\infty  \abs{\hat{\br}^{i}_j}_1\\
	\le &C_{\bM}   \left \{ n\max_i\abs{\left(\bSigma(g^i) - \bS^{i} \right)}_\infty  \abs{\hat{\bm{\omega}}_j}_1  + \max_i\abs{ \left(\bSigma(g^i) - \bS^{k} \right)}_\infty \sum_{i=1} ^n \abs{\hat{\br}^{i}_j}_1 \right \}\\
	&+ \abs{\hat{\bm{\omega}}_j}_1 \max_i\abs{\left(\bSigma(g^i) - \bS^{i} \right)}_\infty\sum_{i=1} ^n  \norm{\bR^{i}}_{L_1} + \max_i\abs{\left(\bSigma(g^i) - \bS^{i} \right)}_\infty \sum_{i=1} ^n\norm{\bR^{i}}_{L_1} \abs{\hat{\br}^{i}_j}_1 \\
	\le & C_{\bM}\sqrt{\log(p^{r+1}C_1^{-1})} C_2^{-1} n^{3/5}\abs{\hat{\bm{\omega}}_j}_1 + C_{\bM} \sqrt{\log(p^{r+1}C_1^{-1})} C_2^{-1} n^{-2/5}\sum_{i=1}^n  \abs{\hat{\br}_j^{i}}_1  \\
	& + C_\bR\sqrt{\log(p^{r+1}C_1^{-1})} C_2^{-1} n^{3/5}\abs{\hat{\bm{\omega}}_j}_1 + C_\bR \sqrt{\log(p^{r+1}C_1^{-1})} C_2^{-1} n^{-2/5}\sum_{i=1}^n  \abs{\hat{\br}_j^{i}}_1 
	\\ = & 2\left( C_{\bM} + C_{\bR} \right) \sqrt{\log(p^{r+1}C_1^{-1})} C_2^{-1} n^{3/5} \left(   \abs{\hat{\bm{\omega}}_j}_1 +\frac{\sum_{i=1}^n\abs{\hat{\br}_j^{i}}_1}{n}  \right),
	\end{split}
	\end{align*}
	where $\hat{\bm{\omega}}_j$ denotes the $j$th column of $\hat{\bOmega}^0$
	
	Since $\bM$ and $ \left \{\bR^{i}\right\}_{i=1}^n $ is a set of feasible solution to (\ref{eq:estimation}), combined with the definition of  \eqref{eq:estimation}, we can derive the upperbound of $\abs{\sum_{i=1} ^n  \left( \bM + \bR^i\right ) \left( \bSigma(g^i) - \bS^{i} \right)\left(\hat{\bOmega}^0 + \hat{\bR}^{i}\right) \be_j }_\infty$ as:
	\begin{align}
	\label{eq:part1}
	\begin{split}
	& \abs{\sum_{i=1} ^n  \left( \bM + \bR^i\right ) \left( \bSigma(g^i) - \bS^{i} \right)\left(\hat{\bOmega}^0 + \hat{\bR}^{i}\right) \be_j }_\infty
	\\ \le & 2\left( C_{\bM} + C_{\bR} \right) \sqrt{\log(p^{r+1}C_1^{-1})} C_2^{-1} n^{3/5} \left(   \abs{\hat{\bm{\omega}}_j}_1+\frac{\sum_{i=1}^n\abs{\hat{\br}_j^{i}}_1}{n}  \right)
	\\ \leq &2\left( C_{\bM} + C_{\bR} \right)^2 \sqrt{\log(p^{r+1}C_1^{-1})} C_2^{-1} n^{3/5},
	\end{split}
	\end{align}
	where the second inequality is due to the Assumption \ref{ass:bound-mr}.
	
	Furthermore, for $ \abs{\sum_{i=1} ^n   \left( \bM + \bR^{i}\right ) \left(\bS^{i}\left(\hat{\bOmega}^0 + \hat{\bR}^{i}\right) -\bI \right ) \be_j }_\infty$, we have:
	
	\begin{align}
	\label{eq:part2}
	\begin{split}
	&\abs{\sum_{i=1} ^n   \left( \bM + \bR^{i}\right ) \left(\bS^{i}\left(\hat{\bOmega}^0 + \hat{\bR}^{i}\right) -\bI \right ) \be_j }_\infty\\
	\le& \norm{\bM}_{L_1} \abs{\sum_{i=1} ^n  \left(\bS^{i}\left(\hat{\bOmega}^0 + \hat{\bR}^{i}\right) -\bI \right ) \be_j }_\infty
	+\sum_{i=1} ^n \norm{\bR^i}_{L_1} \abs{  \left(\bS^{i}\left(\hat{\bOmega}^0 + \hat{\bR}^{i}\right) -\bI \right ) \be_j }_\infty
	\\ \leq & \lambda \left(  \norm{\bM}_{L_1}+ \sum_{i=1} ^n \norm{\bR^i}_{L_1} \right)
	\\ = & ( n C_{\bR} +  C_\bM) (C_{\bM} + C_{\bR}) \sqrt{\log(p^{r+1}C_1^{-1})} C_2^{-1} n^{-2/5}. 
	\end{split}
	\end{align}
	
	Therefore, combining (\ref{eq:part1}) and (\ref{eq:part2}), we can prove the consistency of $\hat{\bOmega}^0$ by
	\begin{align}
	\label{eq:sum}
	\begin{split}
	&\frac{\abs{\sum_{i=1} ^n \left( \bM + \bR^{i}  - \hat{\bOmega}^0 - \hat{\bR}^{i}\right) \be_j }_\infty}{n} 
	\\= & \abs{\hat{\bOmega}^0 -\bM }_{\infty}
	\\ \leq &\left(C_{\bM} + C_{\bR}\right) \left( 3C_{\bR} + (2+n^{-1})C_{\bM} \right) \sqrt{\log(p^{r+1}C_1^{-1})} C_2^{-1} n^{-2/5},
	\end{split}
	\end{align}
	with probability larger than $1- p^{-r}$.

\end{document}